\documentclass[10pt]{article}
\usepackage{fullpage}
\usepackage{times,amsmath,amsfonts,amssymb,url,color,graphicx,natbib}

\usepackage{booktabs} 

\DeclareMathOperator*{\prob}{\mathbb{P}}

\newtheorem{theorem}{Theorem}
\newtheorem{definition}{Definition}
\newtheorem{lemma}{Lemma}

\newcommand{\BlackBox}{\rule{1.5ex}{1.5ex}}  
\newenvironment{proof}{\par\noindent{\bf Proof\ }}{\hfill\BlackBox\\[2mm]}

\newcommand{\lemref}[1]{Lemma~\ref{#1}}

\newcommand{\secref}[1]{Section~\ref{#1}}

\newif\ifcomments
\commentstrue
\ifcomments

\usepackage[utf8]{inputenc}
\usepackage{authblk} 
\usepackage{xcolor} 

\author[1]{Shai Shalev-Shwartz}
\author[1]{Amnon Shashua}
\author[2]{Gal Beniamini}
\author[3]{Yoav Levine}
\author[4]{Or Sharir}
\author[4]{Noam Wies}
\author[4]{Ido Ben-Shaul}
\author[4]{Tomer Nussbaum}
\author[5]{Shir Granot Peled}

\affil[1]{\small Conceptualization, Formal Analysis, Writing}
\affil[2]{Conceptualization, Methodology, Investigation, Software}
\affil[3]{Writing, Methodology, Supervision}
\affil[4]{Methodology, Investigation, Software}
\affil[5]{Software}

\date{AAI, December 2024} 

\begin{document}

\title{Artificial Expert Intelligence through PAC-reasoning}


\maketitle

\textit{The Jewish Talmud asks which type of scholar is preferable:} 

\textit{-- ``Sinai Desert", a scholar who excels in broad knowledge across many subjects, or} 

\textit{-- ``Uprooter of Mountains", a scholar proficient in one subject with exceptional intellectual sharpness? 
}

\begin{abstract}
Artificial Expert Intelligence (AEI) seeks to transcend the limitations of both Artificial General Intelligence (AGI) and narrow AI by integrating domain-specific expertise with critical, precise reasoning capabilities akin to those of top human experts. Existing AI systems often excel at predefined tasks but struggle with adaptability and precision in novel problem-solving. To overcome this, AEI introduces a framework for ``Probably Approximately Correct (PAC) Reasoning". This paradigm provides robust theoretical guarantees for reliably decomposing complex problems, with a practical mechanism for controlling reasoning precision. In reference to the division of human thought into System 1 for intuitive thinking and System 2 for reflective reasoning~\citep{tversky1974judgment}, we refer to this new type of reasoning as System 3 for precise reasoning, inspired by the rigor of the scientific method. AEI thus establishes a foundation for error-bounded, inference-time learning. 

\end{abstract}

\section{Introduction}

What does it take to be a great scientist or domain expert? Beyond technical skills and mastery of domain-specific tools---where AI already excels---there is an elusive quality that distinguishes leading human experts: actual \textit{intelligence}, as manifested by the ability to innovatively synthesize knowledge, while at the same time think critically in the sense of separating correct from incorrect statements and acknowledging the boundaries of knowledge. As a result, great scientists and experts advance humanity by solving novel problems. This paper introduces Artificial Expert Intelligence (AEI) as a new paradigm that combines current AI’s knowledge and skills with the intelligence characteristic of top human experts. We propose a concrete, constructive definition of AEI which provides a blueprint for building AEI systems. 

On our path to understanding the intelligence aspect exhibited by human experts, let us consider the ongoing discussion on the definition of intelligence. 
\citet*{legg2007collection} compiled a list of 70 informal definitions of intelligence.  In his famous imitation game, \citet{turing} tackled the ``can machines think'' question by their ability to be indistinguishable from a human during a conversation. This ability is a specific skill, and therefore does not seem to convey the full generality of what intelligence is. 
\citet{minsky1988society} defines intelligence as the ability to solve the hard problems that intelligent humans would solve, extending the scope of skills, but lacks reference to adaptivity to new scenarios. 
Indeed, \cite{schank1991s} defines intelligence as getting better over time, without specifying how the system gets better over time. This allows naive forms of improvement, such as memorizing facts. According to \citet{kurzweil2000age}, intelligence is the ability to use limited resources optimally – including time – to achieve goals. This was further formalized by \citet{chollet2019measure}, who provides the ability to engineer an artificial chess player as evidence to the distinction between ``skill'' and ``intelligence''. He argues that when humans acquire some skill it provides evidence for their intelligence level, whereas machines can acquire super-human skills for some particular task  (e.g. playing chess) without being intelligent, by employing the intelligence of the human who engineered the skill.  

The above tension between superhuman ``narrow" skills and intelligence is a prominent driver of AI progress in recent years.  In a sense, the term \textit{Artificial General Intelligence} (AGI) was coined after the realization that ``narrow" super-human AI does not advance humanity. However, in eagerness to pursue the ``generality" within AGI, a fundamental question was overlooked: is generality essential for AI to solve the problems that matter most to humanity? Observably, humanity's greatest achievements often come from individuals with profound expertise in highly focused domains. Thus, we argue that generality is neither a necessary nor a sufficient condition for intelligence. 

\citet{chollet2019measure} gives a different definition of intelligence as a measure of skill-acquisition efficiency. 
In the language of learning theory (see for example~\citet{shalev2014understanding}), intelligence should be measured by the resource complexity (time and samples) of learning new task-specific concepts, given the same prior knowledge on the problem. Importantly, for a system to be intelligent, skill acquisition via learning must occur during the deployment (or inference) of the AI system, and not only during training.  Viewed in this prism, in-context learning is a manifestation of intelligence because learning occurs during inference time. In practice, however, in-context learning proved to be rather limited and, in particular, falls short in its ability to solve novel complex tasks.

Following ideas of Chain-of-Thoughts \citep{CoT} and Tree-of-Thoughts \citep{ToT}, flagship AI systems  (\textit{e.g.}, DeepSeek-R1, OpenAI-o1, QwQ~\citep{qwen2.5}) acknowledged the need for investing test-time compute during inference to improve the ability to solve novel complex tasks via a chain of sequential \emph{reasoning} steps. Reasoning can be defined as consisting of an \emph{actor} that generates candidates for the next claim, a \emph{critic} for scoring those candidates, and a \emph{search} heuristic for expanding the possibilities for the next step.
In these systems, the actor, critic, and search heuristic are all defined at training time, leaving test-time-compute solely focused on search over reasoning chains rather than on learning. In this regard, and using Chollet's definition of Intelligence, while these systems have a much more general skill, they lack intelligence in the same manner as their narrow counterpart AI systems developed to play games (e.g., AlphaZero \cite{silver2017mastering}) -- both do not learn at inference time. 

In fact, the reasoning capability of the ``general" reasoning AI engines is not as impressive as the reasoning capability of narrow AI systems for playing games (the latter surpassing top human players) because of \emph{precision}, the ability to separate correct statements from incorrect ones. Indeed, when playing games, the rules of the game dictate the valid next steps. In contrast, in general reasoning, one may argue a claim which is simply incorrect thereby collapsing the whole reasoning chain. Precision is thus a necessary condition for expert intelligence. 

Reasoners can be categorized into three types depending on the precision of each reasoning step. At one end of the spectrum are logical reasoners, where each claim is a logical consequence of previous claims, and hence the precision of each claim is guaranteed. Logical reasoning is only relevant in narrow domains,  such as theorem proving and games with clearly defined rules. At the other end of the spectrum are ``intuitive" reasoners. In intuitive reasoning, the probability that a claim is incorrect, denoted by $\epsilon$, is \emph{fixed} at inference time. As we formally show later,  this implies an exponential decay of the system precision as a function of the number of reasoning steps. This implies that an intuitive reasoner cannot deviate significantly from prior experience on complex problems since these require long reasoning chains. Consequently, an intuitive reasoner is not truly intelligent. In this paper, we advocate for a 3rd type of reasoner, inspired by the \textit{scientific method}, by gathering empirical observations that provide correctness guarantees on each reasoning step. By doing so, the reasoner can control the magnitude of $\epsilon$ during inference time by investing inference-time compute on generating sufficient examples to learn from.

Analogously to Tversky and Kahneman's division of the thought process into System 1 for instinctive thinking and System 2 for reflective reasoning~\citep{tversky1974judgment},  we identify the transition from the immediate response performed by vanilla LLMs to the actor, critic, search-based reasoning methods as a transition from System 1 to System 2. This transition empowered current AI systems, which now perform stronger reasoning than ever. We identify our proposed scientific reasoning as a distinct mode of thought, employed by scientific experts, required for expert-level intelligence: System 3 for guaranteeably precise reasoning. We acknowledge that this is not a primordial mode of thought for humans; rather, it is an invented mode of thought developed in recent centuries that is responsible for the scientific progress made by humans. This paper argues that to reach AEI, one must develop AI systems that contain System 3 thought components for guaranteeably precise reasoning.

To understand the potential impact of the guaranteeably precise reasoning component this paper advocates, an analogy to evolution of human innovation comes to mind. Intuitive reasoning was always around, and was broadly applied by humans when developing new ideas and technologies. Greek philosophers developed formal reasoning ideas, which led to beautiful and sophisticated discoveries, much deeper than discoveries made by prior intuitive reasoning methods. However, those were limited in scope due to the boundaries of the formal constructs. It was not until humans developed the rigorous, observation-based, scientific method on top of their intuition, that the current age of accelerated discovery and innovation began. We believe that Artificial Expert Intelligence, based on similar observation-based, rigorous principles, can unlock a new age of hyper-accelerated discovery and innovation.

\section{From PAC-learning to PAC-reasoning}
\label{sec:pac_reasoning}

Probably Approximate Correct (PAC) learning \citep{valiant} aims at formalizing the notion of deducing a function $g : X \to Y$ based on empirical observations (training examples) of pairs $(x,g(x))$. The model imposes two critical assumptions. First, it is assumed that there is a distribution $\mathcal{D}$ over $X$ such that we can sample independent $x$'s from $\mathcal{D}$ and receive their $g$ value as ``labels''. Secondly, that $g$ belongs to some predefined hypothesis class $H$ known to the learner as prior knowledge. The main premise of PAC-learning is that if the number of examples seen during training is sufficiently large with respect to the VC-dimension \citep{vapnik} of $H$, then we can ``probably approximately" find $g$. In particular, for finite $H$, the required number of examples (sample complexity) grows logarithmically with $|H|$.

This is a very powerful guarantee as we can choose $H$ to be the set of all functions we can implement in Python using $d$ characters and learn this class by order of $d$ examples. The ``devil", however, is the computational complexity of learning, defined by the time required to find a good candidate in $H$, which for this hypothesis class is exponential in $d$. And so, even for medium size programs with $100$ characters, the time complexity of learning is unrealistically large. Deep learning tackles the problem by defining the hypothesis class as a differentiable space with a specific structure (a neural network architecture) and searching a good function in $H$ is performed by Stochastic Gradient Descent (SGD). For some problem-structure pairs, this approach works very well --- for example, Convolutional Neural Networks (CNN) \citep{lecun1998gradient} for computer vision and Transformers \citep{vaswani2017attention} for next-word prediction. However, deep learning fails to generalize even for some simple problems, such as learning to multiply two numbers, so while we solved the computational complexity problem, we are back to a sample complexity problem. Such failures and others motivated practitioners to break the prediction task into a ``chain of thoughts" or, more generally, into a reasoning chain.

In some sense, reasoning imposes structure on the function we aim to learn at \emph{inference} time, as opposed to CNNs and Transformers, which impose structure during training time. In particular, the imposed structure is via decomposing the original problem into simpler sub-problems. Assuming that the sample complexity of each subproblem is significantly smaller than that of the original problem, we unlock the potential to significantly deviate from what we saw during training time. However, errors in the solution of each subproblem accumulate and create an overall decaying precision, which effectively puts a cap on the length of the reasoning chain. Formally, suppose that we have decomposed the original problem into $k$ subproblems and let $\epsilon_i$ be the probability that the i'th subproblem errs over random $x\sim \mathcal{D}$, then the probability that the entire reasoning chain will succeed is 
$$\prod_{i=1}^k (1-\epsilon_i)\approx \exp\left( - \sum_{i=1}^k \epsilon_i \right) $$
If the average (over the space of sub-problems) of $\epsilon_i$ is a fixed $\epsilon > 0$, then the above equals to $\exp(-k\,\epsilon)$. This would be the case for intuitive reasoners. 

In what follows, we start with formalizing the structure imposed by reasoning at inference time, and then we introduce a mechanism for controlling the magnitude of all of the $\epsilon_i$ during inference, hence enabling arbitrarily long reasoning chains. In particular, either the reasoner succeeds and can guarantee a (probably approximately) correct solution, or admits failure and responds ``I don't know", as an expert would. We refer to this framework as PAC-reasoning.

A decomposition of a problem into subproblems can be done either bottom-up or top-down. We formalize both options and underscore the merits of each approach.

\subsection{Bottom-up Reasoning} \label{sec:bottomup}

Similar to the classical PAC-learning model, we are aiming to find a function $g : X \to Y$ and  
we assume that there is a distribution $\mathcal{D}$ over $X$ such that we can sample independent $x$'s from $\mathcal{D}$.
Unlike the classical PAC model, we do not assume that $Y = \{0,1\}$ represents labels, and we do not assume that we have access to $g(x)$ for $x$'s that we sample. Furthermore, we do not assume that $g$ belongs to some predefined hypothesis class $H$. 
Instead, we make a different assumption on the structure of $g$, aiming to formalize a notion of decomposability. This structure also defines a different loss model that our reasoner receives as an alternative to the ``zero-one loss with respect to labels" that the classical PAC model employs.
Our decomposability assumption for the bottom-up reasoner relies on the concept of computation graphs.
\begin{definition}[Computation Graph]
  A computation graph is defined by $([k],E,f_1,\ldots,f_{k})$, where 
  $([k],E)$ are the sets of vertices and edges of a directed acyclic graph, with the vertices sorted topologically\footnote{That is, for every edge $(i,j) \in E$ it must hold that $i < j$.}, and with $v_1$ being the only vertex with no incoming edges. Given an input $x$, the computation graph applies the following:
  \begin{itemize}
  \item The output of $v_1$ is $o_1(x) = f_1(x)$
  \item For $i=2,3,\ldots,k$, the output of $v_i$ is $o_i(x) = f_i( o_j(x) : (j,i) \in E)$
  \end{itemize}
\end{definition}

The bottom-up reasoning process aims at building a computation graph that approximates $g$. It is done by building a search tree (using BFS or DFS), until reaching a root-to-leaf path in the tree that describes a computation graph that approximates $g$. In other words, node $i$ in the root-to-leaf path in the search tree corresponds to vertex $i$ in a computation graph (based on topological order). This is done while relying on an actor and critic policies at each reasoning step, as well as a ``decomposition oracle" for terminating the search process:
\begin{itemize}
\item The \textbf{actor} receives a context $c \in \Sigma^*$ (which represents free text information obtained so far, represented as a sequence of tokens over an alphabet $\Sigma$) as well as a computation graph for the path from the root to the current node. The actor proposes a set of \emph{possibilities} to append one more vertex to the computation graph. 
  Suppose that the current computation graph has $i-1$ vertices. Then each
  possibility is defined by a function $f_i$ and a subset $J \subseteq \{1,\ldots,i-1\}$. This yields a function $o_i$ whose output on $x$ is $o_i(x) = f_i( o_j(x) : j \in J)$. In addition, the actor provides\footnote{Constructing an EV is domain-dependent. For some problems, EV can be constructed by using `nature' as an oracle that calculates correct outputs of $f_i$ (that is, using experimentation). For algorithmic problems, it is often easy to `verify' efficiently but hard to construct a solution (which is the basis of the P vs. NP distinction). For some problems, we can rely on brute force calculations, or on testing of small-size examples and assuming smoothness of the solution with respect to the problem size.} an Example Validator (EV) for each such $f_i$, such that given an input $a$ to $f_i$, the EV outputs True if $f_i(a)$ is correct and False otherwise. In other words, the EV produces a spec for the intended functionality of $f_i$ via a unit test. The critic, discussed in the next item below, is responsible for producing inputs $a$ for the test. We refer to the set of proposals made by the actor as a \emph{proposal class}, $\mathcal{P}$, and each element of $\mathcal{P}$ is a triplet of $(f_i, J, \mathrm{EV})$. It is convenient to think of $\mathcal{P}$  as a finite class, and for simplicity, the analysis we present later is for this case. However, using standard techniques from learning theory, it is not hard to generalize the results to some infinite classes. 
\item The \textbf{critic} receives the proposal class, $\mathcal{P}$, and should output a subset $\mathcal{P}_g \subset \mathcal{P}$ that contains only proposals which are $\epsilon_i$-approximately correct (for a value of $\epsilon_i$ to be specified later on). We say that a proposal is $\epsilon_i$-approximately correct if $\prob_{x \sim \mathcal{D}}[\mathrm{EV}(f_i,a(x)) = \mathrm{False} ] \le \epsilon_i$, where the probability is over an i.i.d. sampling of $x \sim \mathcal{D}$ and $a(x) =  (o_j(x) : j \in J)$ is obtained by running the current computation graph on $x$. We discuss the required value of $\epsilon_i$ as well as how to guarantee that the critic is (probably approximately) correct later on.
\item \textbf{Search Termination by decomposition oracle:} at some step of the search, the actor may think that the computation graph corresponding to the current node in the search tree is representing a correct decomposition of $g$. When that happens, the actor calls it a ``decomposition oracle". The decomposition oracle should approve the computation graph if with probability of at least $1-\epsilon$ over a random sampling of an instance $x \sim \mathcal{D}$, if $\mathrm{EV}(f_i,a_i(x))$ is True for every $i \in [k]$ then $o_k(x) \equiv g(x)$, where '$\equiv$' means that $o_k(x)$ is a correct output to $x$. In other words, the decomposition oracle does not validate the implementation of the individual functions in the computation graph, but merely validates that the decomposition of $g$ to the computation graph (according to its spec) is correct. Similarly to the EV, the method for obtaining a decomposition oracle is domain dependent.
\end{itemize}

The reader familiar with learning theory would notice some similarities between the goal of the critic and the goal of a PAC learner. In both cases, we aim to find functions that agree with some examples up to an $\epsilon$-accuracy. There are, however, some subtle differences. First, in PAC learning, we are aiming to find a single function and not a subset of good functions. Second, in PAC learning all the functions in the hypothesis class are Boolean functions having the same domain, while here, functions in the proposal class may have different domains (i.e., different inputs), different distributions over the domain, and different outputs. 

Despite these differences, as we show in the following lemma,
techniques from learning theory can be generalized to our case.
\begin{lemma} \label{lem:pac_compug}
  Fix $\epsilon,\delta \in (0,1)$. 
  Suppose that $|\mathcal{P}| < \infty$, and let $m \ge \log(|\mathcal{P}|/\delta) / \epsilon$.
  Consider $m$ i.i.d. samples $(x_1,\ldots,x_m)$ from $\mathcal{D}$ and 
 let $\mathcal{P}_g$ be the set of proposals $p = (f, J, \mathrm{EV})$ for which $\mathrm{EV}(f,a(x_t))$ yields True for all $t \in [m]$. Then, with probability of at least $1-\delta$ over the sampling of $(x_1,\ldots,x_m)$, all of the proposals in $\mathcal{P}_g$ are $\epsilon$-accurate.
\end{lemma}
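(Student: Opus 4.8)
The plan is to run the classical realizable-case union-bound argument for a finite hypothesis class, adapted to the proposal-class setting. First I would fix attention on the ``bad'' proposals, namely those $p = (f,J,\mathrm{EV}) \in \mathcal{P}$ that are \emph{not} $\epsilon$-accurate, so that $\prob_{x\sim\mathcal{D}}[\mathrm{EV}(f,a(x)) = \mathrm{False}] > \epsilon$. The only way such a bad proposal can end up in $\mathcal{P}_g$ is if it happens to pass the validator on every one of the $m$ sampled instances. I would therefore aim to bound the probability of this ``survival'' event for each individual bad proposal and then combine these bounds.

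For a fixed bad proposal $p$, note that because the current computation graph is fixed along the path, $a(x_t) = (o_j(x_t) : j \in J)$ is a deterministic function of $x_t$, so the events $\{\mathrm{EV}(f,a(x_t)) = \mathrm{False}\}$ over $t \in [m]$ are i.i.d.\ Bernoulli with failure probability exceeding $\epsilon$. Hence the probability that $p$ passes all $m$ tests is at most $(1-\epsilon)^m \le e^{-\epsilon m}$, using $1 - u \le e^{-u}$. Substituting the assumed lower bound $m \ge \log(|\mathcal{P}|/\delta)/\epsilon$ yields $e^{-\epsilon m} \le \delta/|\mathcal{P}|$.

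I would then take a union bound over all bad proposals. Since there are at most $|\mathcal{P}|$ of them, the probability that \emph{some} bad proposal survives all $m$ tests is at most $|\mathcal{P}| \cdot (\delta/|\mathcal{P}|) = \delta$. Equivalently, with probability at least $1-\delta$ over the draw of $(x_1,\ldots,x_m)$, no bad proposal survives; since every element of $\mathcal{P}_g$ is by definition a surviving proposal, this means every proposal in $\mathcal{P}_g$ is $\epsilon$-accurate, which is exactly the claim.

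The argument is almost entirely routine, so the ``hard part'' is conceptual rather than computational: one must check that the differences from standard PAC flagged in the text---that we retain a whole subset $\mathcal{P}_g$ rather than a single hypothesis, and that proposals may have different input domains and outputs---do not obstruct the bound. The resolution is that each proposal's error is still measured against the \emph{common} source distribution $\mathcal{D}$ (pushed through the shared graph), so the per-sample survival events are genuine i.i.d.\ Bernoulli trials, and the union bound over the finite class $\mathcal{P}$ controls all bad proposals simultaneously regardless of their heterogeneous domains.
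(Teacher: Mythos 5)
Your proof is correct and follows essentially the same route as the paper's: both bound the survival probability of each inaccurate proposal by $(1-\epsilon)^m \le e^{-\epsilon m}$ and apply a union bound over the finite class $\mathcal{P}$ to conclude the failure probability is at most $\delta$. The only cosmetic difference is that you restrict the union bound to the ``bad'' proposals and fold in the $\delta/|\mathcal{P}|$ bound per proposal before unioning, whereas the paper unions first and bounds the summands afterward; your explicit remark that $a(x_t)$ is a deterministic function of $x_t$ (making the per-sample events i.i.d.) is a welcome clarification of a step the paper leaves implicit.
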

\begin{proof}
  For $p = (f,J,\mathrm{EV}) \in \mathcal{P}$, denote $\epsilon(p) := \prob_{x \sim \mathcal{D}}[\mathrm{EV}(f,a(x)) = \mathrm{False} ]$.
  We need to show that
  \[
\prob[ \exists p \in \mathcal{P}_g ~:~ \epsilon(p) > \epsilon] ~\le~ \delta
\]
where the probability is over the sampling of $(x_1,\ldots,x_m)$.
The left-hand side can be re-written as
\[
\prob[ \exists p=(f, J,\mathrm{EV}) \in \mathcal{P} ~:~ \epsilon(p) > \epsilon~\land~ \mathrm{EV}(f,a(x_1))~\land~\mathrm{EV}(f,a(x_2))~\land~\ldots~\land \mathrm{EV}(f,a(x_m))]
\]
Applying the union bound, this is upper bounded by
\[
\sum_{p=(f, J,\mathrm{EV}) \in \mathcal{P}} \prob[ \epsilon(p) > \epsilon~\land~  \mathrm{EV}(f,a(x_1))~\land~\mathrm{EV}(f,a(x_2))~\land~\ldots~\land \mathrm{EV}(f,a(x_m))]
\]
Each summand in the above sum is upper bounded by $(1-\epsilon)^m$ and therefore the above is upper bounded by
\[
|\mathcal{P}|\,(1-\epsilon)^m \le |\mathcal{P}|\,\exp(-\epsilon\,m) \le \delta ~,
\]
where the last inequality is followed from the definition of $m$.
\end{proof}

Based on the above lemma, we can give guarantees on the entire bottom-up reasoning process.
\begin{theorem}
  Fix $\epsilon,\delta \in (0,1)$ and an integer $ k_{\max}$.
  Let $m$ be an integer that satisfies $m \ge \log(|\mathcal{P}|\,k_{\max}/\delta)\,2\,k_{\max}\, / \epsilon$. 
  Consider a bottom-up reasoning process for learning a function $g : X \to Y$, that ends up with a computation graph of $k \le k_{\max}$ vertices, and suppose that the decomposition oracle is $(\epsilon/2)$-correct. Suppose that the critic relies on $(x_1,\ldots,x_m)$ instances sampled i.i.d. according to a distribution $D$ over $X$.  
  Then, with probability of at least $1-\delta$, the output of the computation graph $\epsilon$-approximates $g$, namely, $\prob_{x \sim D}[o_k(x) \equiv g(x)] \ge 1-\epsilon$. 
\end{theorem}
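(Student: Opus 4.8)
\section*{Proof plan}

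The plan is to combine \lemref{lem:pac_compug}, applied once per reasoning step, with the guarantee of the decomposition oracle, splitting the target accuracy $\epsilon$ into an $\epsilon/2$ budget for the accumulated per-step errors and a separate $\epsilon/2$ budget for the oracle. Concretely, I would instantiate \lemref{lem:pac_compug} at each step with per-step accuracy $\epsilon' = \epsilon/(2k_{\max})$ and per-step confidence $\delta' = \delta/k_{\max}$. Plugging these into the sample requirement $m \ge \log(|\mathcal{P}|/\delta')/\epsilon'$ of \lemref{lem:pac_compug} gives exactly $m \ge \log(|\mathcal{P}|\,k_{\max}/\delta)\,2\,k_{\max}/\epsilon$, the hypothesis of the theorem, so the chosen $m$ is precisely what makes each invocation valid.

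First I would fix the good event for the critic. For each step $i \in [k]$, condition on the partial computation graph produced by steps $1,\ldots,i-1$; once this prefix is fixed, the map $x \mapsto a_i(x)$ is deterministic and $\mathcal{P}$ is a fixed finite class, so \lemref{lem:pac_compug} applies and guarantees that, with probability at least $1-\delta'$ over the samples, every proposal the critic retains at step $i$ is $\epsilon'$-accurate. Taking a union bound over the $k \le k_{\max}$ steps, with probability at least $1 - k_{\max}\delta' = 1-\delta$ the proposal $p_i = (f_i,J_i,\mathrm{EV})$ selected at every step $i$ satisfies $\prob_{x\sim D}[\mathrm{EV}(f_i,a_i(x)) = \mathrm{False}] \le \epsilon' = \epsilon/(2k_{\max})$.

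Next I would propagate this into a statement about a single random instance. Working on the good event above, a union bound over the $k$ steps (now over the draw of $x\sim D$) yields $\prob_{x\sim D}[\exists i\in[k]: \mathrm{EV}(f_i,a_i(x)) = \mathrm{False}] \le k\,\epsilon' \le \epsilon/2$, i.e.\ all the unit tests pass simultaneously with probability at least $1-\epsilon/2$. Since the decomposition oracle is $(\epsilon/2)$-correct, the event ``$(\forall i: \mathrm{EV}(f_i,a_i(x)) = \mathrm{True}) \Rightarrow o_k(x)\equiv g(x)$'' also holds with probability at least $1-\epsilon/2$ over $x$. A final union bound over these two events shows that with probability at least $1-\epsilon$ both hold, and on their intersection all tests pass and hence $o_k(x)\equiv g(x)$; this gives $\prob_{x\sim D}[o_k(x)\equiv g(x)] \ge 1-\epsilon$ whenever the critic's good event holds, i.e.\ with probability at least $1-\delta$.

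The step I expect to be the main obstacle is the reuse of samples across reasoning steps: because each test $\mathrm{EV}(f_i,a_i(\cdot))$ is evaluated through the outputs of earlier vertices, and those vertices were themselves chosen using the same instances, the later proposal classes are not independent of the data. The conditioning argument above is what resolves this — one conditions on the realized prefix graph and invokes \lemref{lem:pac_compug} for the fixed class $\mathcal{P}$ against the induced deterministic inputs $a_i(\cdot)$ — but it relies on the instances feeding step $i$ being independent of that prefix, so I would either draw a fresh batch of $m$ instances per step or prove the bound for each fixed prefix and then integrate out. I would also keep the two probability scales distinct throughout: the $\delta$-level randomness is over the critic's samples, whereas the $\epsilon$-level randomness is over the test instance $x$, and the two families of union bounds must not be conflated.
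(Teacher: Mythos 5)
Your proposal is correct and follows essentially the same route as the paper's proof: split the target accuracy into $\epsilon/2$ for the accumulated per-vertex errors and $\epsilon/2$ for the decomposition oracle, invoke \lemref{lem:pac_compug} with $\hat{\epsilon}=\epsilon/(2k_{\max})$ and $\hat{\delta}=\delta/k_{\max}$, union bound over the $k\le k_{\max}$ vertices at the $\delta$ level, and union bound over the vertices again at the $\epsilon$ level before combining with the oracle's guarantee. Your closing remark on sample reuse across steps (the prefix graph being data-dependent) flags a genuine subtlety that the paper's proof passes over in silence, and your proposed fixes --- fresh batches per step, or conditioning on the prefix and integrating out --- are reasonable ways to make that application of the lemma rigorous.
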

\begin{proof}
Let $X_{\not d}$ be the set of instances for which the decomposition oracle is correct.
  Based on the assumptions in the theorem, given some $x \in X_{\not d}$, the computation graph might give a wrong answer if for some vertex $i$ we have that $\mathrm{EV}(f_i,a_i(x))$ outputs False. Let $X_i \subseteq X$ denote the set of instances for which $\mathrm{EV}(f_i,a_i(x))$ outputs False. It follows that we need to show that with a probability of at least $1-\delta$ over the sampling of $(x_1,\ldots,x_m)$ we have that $D(\cup_{i \in [k]} X_i) \le \epsilon/2$.

  Define $\hat{\delta} = \delta/k_{\max}$ and $\hat{\epsilon} = \epsilon/(2k_{\max})$. According to \lemref{lem:pac_compug}, with probability of at least $1-\hat{\delta}$, we have that $D(X_i) \le \hat{\epsilon}$. Applying the union bound over every $i \in [k]$ we obtain that with probability of at least $1 - \hat{\delta}\,k$, for each $i \in [k]$, $D(X_i) \le \hat{\epsilon}$. Using again the union bound, this means that
  $D(\cup_{i \in [k]} X_i) \le \hat{\epsilon}\,k$. The theorem follows by using the definitions of $\hat{\delta} = \delta/k_{\max}$ and $\hat{\epsilon} = \epsilon/(2k_{\max})$. 
\end{proof}

\subsection{Top-Down Reasoning}

In \secref{sec:bottomup} we presented a bottom-up reasoning process. We call it bottom-up because a computation graph implements each operation before using it. A different reasoning approach is a top-down process in which we allow ``forward references'' to un-implemented functions.
To motivate top-down reasoning, consider the following stand-alone Python implementation of the merge-sort algorithm.

\begin{verbatim}
def merge_sort(arr):

    if len(arr) <= 1:
        return arr

    current_size = 1

    while current_size < len(arr) - 1:
        for start in range(0, len(arr) - current_size, current_size * 2):
            mid = start + current_size
            end = min(start + 2 * current_size, len(arr))
            arr[start:end] = merge(arr[start:mid], arr[mid:end])

        current_size *= 2

    return arr


def merge(left, right):

    sorted_list = []
    i = j = 0
    while i < len(left) or j < len(right):
        if j >= len(right) or (i < len(left) and left[i] <= right[j]):
            sorted_list.append(left[i])
            i += 1
        else:
            sorted_list.append(right[j])
            j += 1

    return sorted_list

\end{verbatim}

We've intentionally written the code with a top-down structure, starting with the main function we want to implement (``merge\_sort''). The main function relies on a helper function (``merge''). We implement the helper function after completing the main function. 

Attempting to implement merge\_sort using a computation graph, we immediately notice some undesired phenomena. First, while the code itself does not depend on the length of the array we aim to sort, the structure of a computation graph does depend on it. So we need a dedicated computation graph per each length of the array. Furthermore, even if we fix the size of the array, we notice that the ``merge'' function is being called multiple times with different inputs. As a result, we need a dedicated vertex in the computation graph for each call to each function. In other words, we need to re-implement the helper functions again and again. Besides being inefficient, this also unnecessarily enlarges the number of reasoning steps we need to perform. 

This motivates a different approach, which we call a top-down reasoning process. We are again aiming to approximate a function $g : X \to Y$. In the aforementioned example, this function would be ``merge\_sort''.

The top-down reasoning process builds a search tree. Each node of the search tree is labeled by two sets of functions $I, U$, which denote functions that have been already implemented ($I$) and functions that were declared but are still not implemented ($U$, for Unimplemented functions). We initialize the root of the search tree with $I = \emptyset, U = \{g\}$. 

The actor of the top-down reasoning process works on some leaf node of the current search tree. The actor constructs a set of proposals, $\mathcal{P}$, for expanding the leaf, making it an internal node. Each $p \in \mathcal{P}$ proposes an implementation of some unimplemented function, $f \in U$. The proposed implementation may involve new helper functions, which will be added to $I$. So, the child of the node corresponding to this proposal will remove $f$ from $U$ and will add the new helper functions to $I$. Whenever we reach a node for which $U = \emptyset$, we get a complete implementation of the original function $g$ (and the search terminates).

In the bottom-up construction, we also required the actor to provide an Example Validator (EV), that given an input to a node's function and a proposed implementation of the function, the EV assesses the correctness of the output of the proposed function on the given input. Here, we make an additional assumption: The actor should produce a Reference Implementation (RI), potentially a non-efficient one, of each helper function it relies on\footnote{Constructing an RI is domain-dependent. In the context of algorithmic problems, creating RI often amounts to ``translating" the required functionality into code by a brute-force implementation. Evidently, LLMs are fairly proficient at this task.}. This assumption is needed for the critic, as we explain below.

As in the bottom-up reasoning process, the critic receives the proposal class, $\mathcal{P}$, and should output a subset $\mathcal{P}_g \subset \mathcal{P}$ that contains only proposals that are approximately correct. Fix some proposal, and let $f$ be the function it implements. Let $x$ be some input to $f$. Then, we can run $f$ on $x$ by relying on the RI of the helper functions $f$ is using, and then assess its correctness using the EV. But in order to guarantee the correctness of the implementation, we need to test it on inputs that are induced by the original probability over $X$. This was relatively straightforward in the bottom-up process, where we propagated the original input through the computation graph. But here, as evident by the ``merge\_sort'' example, we may call each helper function multiple times with various inputs. We solve this problem by relying on the top-down structure of the process: we require that an implementation of each helper function would come after its usage. By doing so, we can sample $(x_1,\ldots,x_m)$ from the original distribution, $\mathcal{D}$, over the domain of $g$, and for each $x_i$ and each helper function $f_j$, we log all the inputs to calls to $f_j$ we have made while running on $x_i$. We then say that $f_j$ is correct on $x_i$ only if it is correct on \emph{all} of these calls.
This leads to the following analysis.
\begin{lemma} \label{lem:pac_topdown}
  Fix $\epsilon,\delta \in (0,1)$. 
  Suppose that $|\mathcal{P}| < \infty$, and let $m \ge \log(|\mathcal{P}|/\delta) / \epsilon$.
  Consider $m$ i.i.d. samples $(x_1,\ldots,x_m)$ from $\mathcal{D}$.
Fix some $p \in \mathcal{P}$ and let $f$ be the function it implements. 
  For each $x \in X$, let $S(x)$ be the set of inputs to $f$ obtained from all calls to $f$ as a helper function in previous nodes from the root of the search tree when running on $x$. 
  Let $\mathcal{P}_g$ be the set of proposals for which $\mathrm{EV}(f,s)$ yields True for all $s \in S(x_i)$ and all $i \in [m]$. Then, with probability of at least $1-\delta$ over the sampling of $(x_1,\ldots,x_m)$, all of the proposals in $\mathcal{P}_g$ are $\epsilon$-accurate, where we say that $p = (f,\mathrm{EV}, \mathrm{RI})$ is $\epsilon$-accurate if $\epsilon(p) \le \epsilon$ where 
  $\epsilon(p) := \prob_{x \sim \mathcal{D}}[\exists s \in S(x)~:~\mathrm{EV}(f,s) = \mathrm{False} ]$.
\end{lemma}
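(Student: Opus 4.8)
The plan is to mirror the argument of \lemref{lem:pac_compug} almost verbatim, the only change being the definition of the per-sample ``failure event.'' First I would fix a proposal $p = (f,\mathrm{EV},\mathrm{RI}) \in \mathcal{P}$ and observe that, because $p$ is fixed and the implementations at all ancestor nodes along the root-to-$p$ path are already fixed, the set $S(x)$ of inputs fed to $f$ while running the partial program on $x$ is a \emph{deterministic} function of $x$. Consequently the single-sample event $B_p(x) := \{\exists s \in S(x) : \mathrm{EV}(f,s) = \mathrm{False}\}$ is a well-defined measurable event, and by the definition given in the statement its probability is exactly $\prob_{x\sim\mathcal{D}}[B_p(x)] = \epsilon(p)$.

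Next I would use that $x_1,\ldots,x_m$ are i.i.d.\ to conclude that the events $B_p(x_1),\ldots,B_p(x_m)$ are mutually independent, each with probability $\epsilon(p)$. The proposal $p$ enters $\mathcal{P}_g$ precisely when $\mathrm{EV}(f,s)$ is True for every $s\in S(x_i)$ and every $i \in [m]$, i.e.\ when \emph{none} of the events $B_p(x_i)$ occurs. Hence the probability that $p$ survives into $\mathcal{P}_g$ equals $(1-\epsilon(p))^m$. This is the step where the top-down logging construction pays off: by requiring the implementation of $f$ to come after all of its uses, we collect the entire batch of invocation inputs $S(x_i)$ from one top-level sample $x_i$, so that the test ``$f$ passes on $x_i$'' is a single faithful Bernoulli trial for the true error $\epsilon(p)$.

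I would then restrict attention to the ``bad'' proposals, those with $\epsilon(p) > \epsilon$. For such a $p$ the survival probability is bounded by $(1-\epsilon)^m \le \exp(-\epsilon m)$. Applying the union bound over the at most $|\mathcal{P}|$ bad proposals gives
\[
\prob[\exists p \in \mathcal{P}_g : \epsilon(p) > \epsilon] ~\le~ |\mathcal{P}|\,\exp(-\epsilon\,m) ~\le~ \delta,
\]
where the final inequality follows from substituting $m \ge \log(|\mathcal{P}|/\delta)/\epsilon$; this is exactly the claim.

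The proof is essentially routine once the setup is pinned down; the only genuine subtlety — and the step I would be most careful about — is justifying both that $\prob[B_p(x)] = \epsilon(p)$ and that these events are independent across samples. This hinges on $S(x)$ depending on $x$ alone and not on any randomness internal to the reasoning process at this node, and on treating ``$f$ correct on $x_i$'' as a single pass/fail of the entire batch $S(x_i)$ rather than as separate per-call trials. If one instead counted each call to $f$ as its own sample, the calls arising from a common $x_i$ would be correlated and the clean identity $(1-\epsilon(p))^m$ would no longer hold; bundling all calls triggered by one sample into one trial is precisely what restores the i.i.d.\ structure needed for the union-bound argument.
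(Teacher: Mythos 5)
Your proof is correct and follows essentially the same route as the paper's: a union bound over proposals whose true error exceeds $\epsilon$, combined with the observation that each such proposal survives all $m$ tests with probability at most $(1-\epsilon)^m \le \exp(-\epsilon m) \le \delta/|\mathcal{P}|$. Your explicit justification that each top-level sample $x_i$ yields a single Bernoulli trial (by bundling all logged calls in $S(x_i)$ into one pass/fail event) is a careful spelling-out of what the paper leaves implicit, but it is the same argument.
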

\begin{proof}
  We need to show that
  \[
\prob[ \exists p \in \mathcal{P}_g ~:~ \epsilon(p) > \epsilon] ~\le~ \delta
\]
where the probability is over the sampling of $(x_1,\ldots,x_m)$.
The left-hand side can be re-written as
\[
  \prob[ \exists p=(f, \mathrm{EV}, \mathrm{RI}) \in \mathcal{P} ~:~
\epsilon(p) > \epsilon~\land~ \forall i \in [m],~\forall s \in S(x_i),~ \mathrm{EV}(f, s)]
\]
Applying the union bound, this is upper bounded by
\[
\sum_{p=(f_i, \mathrm{EV}, \mathrm{RI}) \in \mathcal{P}} \prob[ \epsilon(p) > \epsilon~\land~ \forall i \in [m],~\forall s \in S(x_i),~ \mathrm{EV}(f, s)]
\]
Each summand in the above sum is again upper bounded by $(1-\epsilon)^m$ and therefore the above is upper bounded by
\[
|\mathcal{P}|\,(1-\epsilon)^m \le |\mathcal{P}|\,\exp(-\epsilon\,m) \le \delta ~,
\]
where the last inequality is followed from the definition of $m$.
\end{proof}

A guarantee on the entire top-down reasoning process follows:
\begin{theorem}
  Fix $\epsilon,\delta \in (0,1)$ and an integer $ k_{\max}$.
  Let $m$ be an integer that satisfies $m \ge \log(|\mathcal{P}|\,k_{\max}/\delta)\,k_{\max}\, / \epsilon$. 
  Consider a top-down reasoning process for learning a function $g : X \to Y$, that ends up with a leaf in the search tree with $k \le k_{\max}$ functions. Suppose that the critic relies on $(x_1,\ldots,x_m)$ instances sampled i.i.d. according to a distribution $D$ over $X$. 
  Then, with probability of at least $1-\delta$, the implementation of $g$ by the root of the search tree, and using the helper functions, denoted $\hat{g}$, is $\epsilon$-approximately correct, namely, $\prob_{x \sim D}[\hat{g}(x) \equiv g(x)] \ge 1-\epsilon$. 
\end{theorem}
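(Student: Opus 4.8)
The plan is to mirror the proof of the bottom-up theorem, with the decomposition oracle replaced by a composition argument that comes ``for free'' in the top-down setting. For each of the $k \le k_{\max}$ functions in the final leaf, index its proposal by $p_i = (f_i, \mathrm{EV}_i, \mathrm{RI}_i)$ and let $S_i(x)$ be the set of inputs fed to $f_i$ while running on $x$, as in \lemref{lem:pac_topdown}. Define the bad set $X_i = \{x : \exists s \in S_i(x),\ \mathrm{EV}_i(f_i,s) = \mathrm{False}\}$, so that $D(X_i) = \epsilon(p_i)$. The crux of the reduction is the claim that whenever $x \notin \bigcup_{i=1}^k X_i$ --- that is, no call to any function fails its EV on $x$ --- the root implementation returns a correct value, giving $\prob_{x\sim D}[\hat g(x) \not\equiv g(x)] \le D\!\left(\bigcup_{i=1}^k X_i\right)$.

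First I would establish this reduction, which is the step that departs from the bottom-up case and is the main obstacle. In the bottom-up construction the decomposition oracle was needed precisely because a computation graph can wire together individually-correct vertices incorrectly; here no such oracle is required because the implementation rooted at $g$ \emph{is} the decomposition, so correctness must be certified compositionally by the EVs together with the Reference Implementations. Concretely, since each $\mathrm{RI}_j$ is a correct implementation of $f_j$, the critic's test --- which runs $f_i$ with its helpers replaced by their RIs --- certifies that $f_i$'s implementation is correct on an input $s$ whenever its helpers return correct outputs. I would then argue by induction on the call DAG, from its leaves upward: in the base case a function calling no helpers returns a correct value on every $s \in S_i(x)$ exactly when $\mathrm{EV}_i$ passes; in the inductive step, if every helper of $f_i$ already returns correct values on input $x$, the actual execution of $f_i$ coincides with its RI-based reference execution, so the passing of $\mathrm{EV}_i$ transfers from the critic's test to the true computation. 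The delicate point, which I would treat carefully, is that the inputs $S_i(x)$ arising in the real execution depend on the outputs of previously-called helpers; the induction must therefore carry the invariant that correctness of lower-level outputs guarantees that the inputs presented to higher-level functions are exactly those on which the EVs were validated. Applying the induction to the root $g$ then yields $\hat g(x) \equiv g(x)$, establishing the reduction.

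With the reduction in hand, the remainder is a concentration-plus-union-bound argument identical in spirit to the bottom-up proof, only without the factor of two (there is no oracle claiming half of the error budget). I would set $\hat\epsilon = \epsilon / k_{\max}$ and $\hat\delta = \delta / k_{\max}$, and check that the hypothesis $m \ge \log(|\mathcal{P}|\,k_{\max}/\delta)\,k_{\max}/\epsilon$ is exactly $m \ge \log(|\mathcal{P}|/\hat\delta)/\hat\epsilon$, so \lemref{lem:pac_topdown} applies to each fixed function: with probability at least $1-\hat\delta$ its proposal is $\hat\epsilon$-accurate, i.e.\ $D(X_i) = \epsilon(p_i) \le \hat\epsilon$. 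A union bound over the at most $k_{\max}$ functions gives that, with probability at least $1 - k_{\max}\hat\delta = 1-\delta$, every $D(X_i) \le \hat\epsilon$ simultaneously. A second union bound over the sets then yields $D\!\left(\bigcup_{i} X_i\right) \le \sum_i D(X_i) \le k_{\max}\,\hat\epsilon = \epsilon$. Combining this with the reduction gives $\prob_{x\sim D}[\hat g(x) \not\equiv g(x)] \le \epsilon$, equivalently $\prob_{x\sim D}[\hat g(x) \equiv g(x)] \ge 1-\epsilon$, with probability at least $1-\delta$ over the sample, as required.
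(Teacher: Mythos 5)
Your proposal is correct and follows essentially the same route as the paper's proof: the same bad sets $X_i$, the same instantiation of \lemref{lem:pac_topdown} with $\hat{\epsilon} = \epsilon/k_{\max}$ and $\hat{\delta} = \delta/k_{\max}$, and the same double union bound (over the $k \le k_{\max}$ functions, then over the sets) yielding $D\bigl(\cup_i X_i\bigr) \le \epsilon$ with probability at least $1-\delta$. The only difference is that you spell out, via induction on the call structure using the Reference Implementations, the reduction step that passing all EVs on all logged calls implies $\hat{g}(x) \equiv g(x)$ --- a step the paper asserts in a single sentence (``the output of $\hat{g}(x)$ would not be correct if some call to some helper function is not correct''), so your treatment is, if anything, more thorough on that point.
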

\begin{proof}
Given some $x \in X$, the output of $\hat{g}(x)$ would not be correct if some call to some helper function is not correct. For some helper function $f$, let $X_f \subseteq X$ denote the set of instances for which there exists $s \in S(x)$ such that $\mathrm{EV}(f,s)$ outputs False. It follows that we need to show that with a probability of at least $1-\delta$ over the sampling of $(x_1,\ldots,x_m)$ we have that $D(\cup_{f \in I} X_f) \le \epsilon$.

  Define $\hat{\delta} = \delta/k_{\max}$ and $\hat{\epsilon} = \epsilon/k_{\max}$. According to \lemref{lem:pac_topdown}, with probability of at least $1-\hat{\delta}$, we have that $D(X_f) \le \hat{\epsilon}$. Applying the union bound over every $f \in I$ we obtain that with probability of at least $1 - \hat{\delta}\,k$, for each $f \in I$, $D(X_f) \le \hat{\epsilon}$. Using again the union bound, this means that
  $D(\cup_{f \in I} X_f) \le \hat{\epsilon}\,k$. The theorem follows by using
 $|I|=k$ and the definitions of $\hat{\delta} = \delta/k_{\max}$ and $\hat{\epsilon} = \epsilon/k_{\max}$. 
\end{proof}

\section{Discussion}

The introduction of Artificial Expert Intelligence (AEI) marks a significant step toward building AI systems that are not only capable of skillful execution but also embody the precision, adaptability, and rigor characteristic of human experts. By integrating the PAC-reasoning framework, AEI addresses two critical challenges in AI: error accumulation in long reasoning chains and the inability to adapt reliably to novel and complex problems. The key innovation lies in the establishment of a ``System 3" reasoning process, which parallels the scientific method, combining empirical validation with structured problem-solving. 
AEI's ability to blend bottom-up and top-down reasoning processes offers a flexible approach to tackling a broad spectrum of problems, from algorithmic challenges to engineering and scientific problem-solving. This adaptability positions AEI as a versatile tool for advancing problem-solving methodologies in domains where existing AI systems struggle to generalize or ensure accuracy.

The reliance on domain-specific decomposition oracles and example validators introduces challenges in scaling the system to highly diverse or poorly understood domains. 
Future research should explore methods to generalize AEI’s principles to broader, less structured domains, potentially by improving its ability to autonomously construct validators and decomposition oracles. 

AEI represents a paradigm shift in AI, emphasizing not just capability but also precision. As AEI systems evolve, they have the potential to redefine our expectations of artificial intelligence, offering a new era of hyper-accelerated innovation driven by expert-level reasoning.

\bibliographystyle{plainnat}
\bibliography{bib}

\end{document}